\newcommand{\er}{Erd\H{o}s-R\'{e}nyi } 
\newtheorem{thm}{Theorem}
\begin{document}

\title{Locally Boosted Graph Aggregation for Community Detection \thanks{This work
is sponsored by the Assistant Secretary of Defense for Research \& Engineering
under Air Force Contract FA8721-05-C-0002.  Opinions, interpretations,
conclusions and recommendations are those of the authors and are not
necessarily endorsed by the United States Government.}}
\author[1,2]{Jeremy Kun}
\author[1]{Rajmonda S. Caceres}
\author[1]{Kevin M. Carter}
\affil[1]{MIT Lincoln Laboratory}
\affil[2]{University of Illinois at Chicago}

\date{}

\maketitle

\begin{abstract} \small \baselineskip=9pt 
Learning the right graph representation from noisy, multi-source data has
garnered significant interest in recent years. A central tenet of this problem
is relational learning. Here the objective is to incorporate the partial
information each data source gives us in a way that captures the true
underlying relationships. To address this challenge, we present a general,
boosting-inspired framework for combining weak evidence of entity associations
into a robust similarity metric. Building on previous work, we explore the
extent to which different local quality measurements yield graph
representations that are suitable for community detection. We present empirical
results on a variety of datasets demonstrating the utility of this framework,
especially with respect to real datasets where noise and scale present serious
challenges. Finally, we prove a convergence theorem in an ideal setting
and outline future research into other application domains.  \end{abstract}

\section{Introduction}
In the study of networks, the data used to define nodes and edges often come
from multiple sources. These sources are often noisy and ambiguously useful,
and the process of combining them into a single graph representation is
critically important. For example, suppose we are studying a social network and
wish to detect communities. The data that indicate membership in the same
community are plentiful: communication, physical proximity, reported
friendship, and many others. Each data source carries a different level of
information about the underlying social structure, and each may accurately
represent only some of the individuals. Some groups of friends communicate
through Facebook and others via Instagram, etc. The best way to aggregate 
this information is far from clear, and recent work has shown that the choice
of graph representation significantly impacts the performance of subsequent 
machine learning and data mining algorithms
\cite{Getoor2005,Gallagher2008,Neville2005,Caceres2011,Miller2014}. 

Further complicating matters, the quality of the aggregated graph depends
heavily on the application domain. Community detection is easiest with a graph
representation that retains only inter-community edges, but to predict the
spread of a virus one must have access to the strongest cross-community
conduits. Suitable graph representations for these two tasks may come from the
same data but are qualitatively different enough to warrant different
aggregation techniques. 

Even though the impact of the graph representation on subsequent analysis has
been widely studied, there are few techniques for learning the right graph
representations. Aggregation is often ad-hoc in practice, making it difficult
even to compare algorithms within the same application domain using different
data sources. The need for rigorous approaches to graph representation learning
is even more apparent with big data, where variety and veracity compound the
challenges of volume and velocity.

In this paper, we present a graph aggregation framework designed to make the
process of learning a useful underlying graph representation rigorous with respect
to application specific requirements. Our framework is called \emph{Locally
Boosted Graph Aggregation (LBGA)}. LBGA extracts the application-specific
aspects of the learning objective as an event $A$ representing an operation on
the graph (e.g. a clustering algorithm, a random walk, etc.) and a local
quality measure $q$. The framework incorporates this information into a
reward system that promotes the presence of good edges and the absence of bad
edges, in a fashion inspired by boosting.

Building on our work in~\cite{CCK14}, we demonstrate LBGA with the application
of community detection. In this context the goal of graph representation
learning is to aggregate the different data sources into a single graph which
makes the true community structure easy to detect.  LBGA evaluates the graph
data locally, so that it can choose the data sources which most accurately
represent the local structure of communities observed in real
networks~\cite{Aggarwal2011,Leskovec2008}. In the absence of ground truth
knowledge or one efficiently computable measure that can capture true community
quality, LBGA relies on the pair of a graph clustering algorithm $A$ and a
local clustering metric $q$ as an evaluation proxy.  We show through empirical
analysis that our algorithm can learn a high-quality global representation
guided by the local quality measures considered. 

We make the following contributions:

\begin{enumerate} 
   \item We present a graph aggregation framework that learns a useful graph
representation with respect to an application requiring only a local heuristic
measure of quality.
   \item Our framework is stochastic and incorporates both edge and non-edge
information, making it robust and suitable for sparse and noisy networks.
   \item We demonstrate the success of an algorithm implementing the framework
for community detection, testing it against both synthetic data and real-world
data sets. 
   \item We perform sensitivity and scalability analyses of our algorithm,
showing that the algorithm scales linearly in the number of edges and is
robust enough to handle large, noisy graphs.  
   \item We prove a convergence theorem for our framework and suggest the next
steps in proving performance guarantees.  
\end{enumerate} 

The rest of the paper is organized as follows. In Section~\ref{sec:related} we
give a brief overview of related literature. In Section~\ref{sec:lbga} we
discuss in detail the LBGA framework. In Section~\ref{sec:experiments} we
present the experimental analysis and results. In
Section~\ref{sec:additional-analysis} we discuss sensitivity to noise and
scalability. In Section~\ref{sec:convergence-theorem} we prove our convergence
theorem, and in Section~\ref{sec:conclusion} we discuss future work.

\section{Related Work} 
\label{sec:related}
\subsection{Representation Learning and Clustering}
Representation learning has garnered a lot of interest and research in recent
years. Its goal is to introduce more rigor to the often ad-hoc practices of
transforming raw, noisy, multi-source data into inputs for data mining and
machine learning algorithms. Within this area, representation learning of
graph-based data includes modeling decisions about the nodes of the graph, the
edges, as well as the critical features that characterize them both.

In this context, Rossi et al.~\cite{Rossi2012} discuss transformations to
heterogeneous graphs (graphs with multiple node types and/or multiple edge
types) in order to improve the quality of a learning algorithm. Within their
taxonomy, our work falls under the link interpretation and link re-weighting
algorithms \cite{Xiang2010,Gilbert2009}. Our setting is different because we
explicitly allow different edge types between the same pair of vertices. Also,
our approach is stochastic, which we find necessary for learning a robust
representation and weeding out noise. 

Clustering in multi-edge graphs
\cite{Papalexakis2013,Tang2009,Tang2012,Mucha2010,Berlingerio2011} is another
area with close connections to our work. A common thread among these existing
approaches is clustering by leveraging shared information across different
graph representations of the same data. These approaches do not address
scenarios where the information provided by the different sources is
complementary or the overlap is scarce. In contrast, our approach iteratively
selects those edge sources that lead to better clustering quality,
independently of disagreement across the different features.
\cite{Rocklin2013,Cai2005} present approaches for identifying the right graph
aggregation, given a complete ground truth clustering, or a portion of it
(i.e.: the cluster assignment is known only for a subset of the vertices in the
graph). Our framework requires no such knowledge, but we do use ground truth to
validate our experiments on synthetic data (Section \ref{sec:validation}). 

Balcan and Blum define in \cite{Balcan2006,Balcan2008} a list of intuitive
properties a similarity function needs to have in order to be able to cluster
well. However, testing whether a similarity function has the discussed
properties is NP-hard, and often dependent on having ground truth available.
Our model instead uses an efficiently computable heuristic as a rough guide.

\subsection{Boosting and Bandits}
Our framework departs from previous work most visibly through its algorithmic
inspirations, namely boosting \cite{Schapire90} and bandit learning (see
\cite{Bubeck12} for a survey of the latter). In boosting, we assume we have a
{\em weak classifier}, whose performance is only slightly better than random.
In his landmark paper \cite{Schapire90}, Schapire proved that weak classifiers
can be combined via a majority voting scheme to form a learner that achieves
arbitrarily close to perfect generalization. We think of different graph data
sources as weak learners in that they offer knowledge on when an edge should be
present. Then the question becomes whether one can ``boost'' the knowledge in
the different graphs to make one graph representation that is arbitrarily good.

Unfortunately, our problem setting does not allow pure boosting for two
reasons. First, boosting assumes the learners are all slightly better than
random, but graph representations can be pure noise or can even provide
\emph{bad} advice. Second, boosting has access to ground truth classification
labels. Even with reliable input data, the quality of the aggregation depends
on the application and many applications have no standard measure of quality. 

Ideas from bandit learning compensate for these problems. In bandit learning an
algorithm receives rewards as it explores a set of actions, and the goal is to
minimize some notion of regret in hindsight. The basic model has many variants,
but two central extensions in the literature are expert advice and adversaries.
Expert advice consists of functions suggesting to the algorithm what action to
take in each round (e.g., weak classifiers). The adversarial setting involves
an adversary who knows everything but the random choices made by the algorithm
in advance, and sets the experts or rewards so as to incur maximum regret. 

We apply these ideas to graph representation learning by setting up an
articifial reward system based on the given application and using stochasticity
to weed out adversarial portions of the input graphs. In our setting we only
care if the graph representation is good at the end, while bandit learning
often seeks to maximize cumulative rewards during learning. There are bandit
settings that only care about the final result (e.g., the pure exploration
model of Bubeck et al. \cite{Bubeck09}), but to the best of our knowledge no
theoretical results in the bandit literature immediately apply to our
framework. This is largely because we rely on heuristic proxies to measure the
quality of a graph, so even if the bandit learning objective is optimized we
cannot guarantee the result is useful.\footnote{For example, the empty graph
maximizes some proxies but is entirely useless.} Nevertheless we can adapt the
successful techniques and algorithms for boosting and bandit learning, and hope
they produce useful graphs in practice. As the rest of this paper demonstrates,
they do indeed.

The primary technique we adapt from bandits and boosting is the Multiplicative
Weights Update Algorithm (MWUA) \cite{Arora12}. The algorithm works as follows.
A list of weights is maintained on each element $x_j$ of a finite set $X$. At
each step of some process an element $x_i$ is chosen (in our case, by
normalizing the weights to a probability distribution and sampling), a reward
$q_{t,i}$ is received, and the weight for $x_i$ is multiplied or divided by $(1
+ \varepsilon q_{t,i})$, where $\varepsilon >0$ is a fixed parameter
controlling the rate of update. After many rounds, the elements with the
highest weight are deemed the best and used for whatever purpose needed.

\section{The Locally Boosted Graph Aggregation Framework}
\label{sec:lbga}

The Locally Boosted Graph Aggregation framework (LBGA) can succinctly be
described as running MWUA for each possible edge, forming a candidate graph
representation $G_t$ in each round by sampling from all edge distributions, and
computing local rewards on $G_t$ to update the weights for the next round. Over
time $G_t$ stabilizes and we produce it as output. The remainder of this
section expands the details of this sketch and our specific algorithm
implementing it.  

\subsection{Framework Details}
\label{sec:framework}

Let $H_1, \dots, H_m$ be a set of unweighted\footnote{There is a
natural extension for weighted graphs.}, undirected graphs defined on the same vertex set
$V$. We think of each $H_i$ as ``expert advice'' suggesting for any pair of
vertices $u,v \in V$ whether to include edge $e=(u,v)$ or not.  Our primary
goal is to combine the information present in the $H_i$ to produce a global
graph representation $G^*$ suitable for a given application. 

We present LBGA in the context of community detection, noting what aspects can
be generalized. Each round has four parts: producing the aggregate candidate
graph $G_t$, computing a clustering $A$ for use in measuring the quality of
$G_t$, computing the local quality of each edge, and using the quality values
to update the weights for the edges. After some number of rounds $T$, the
process ends and we produce $G^* = G_T$.

\textbf{Aggregated Candidate Graph $G_t$}: In each round produce a graph $G_t$
as follows. Maintain a non-negative weight $w_{u,v,i}$ for each graph $H_i$ and
each edge $(u,v)$ in $H_1 \cup \dots \cup H_m$. Normalize the set of all
weights for an edge $\mathbf{w}_{u,v}$ to a probability distribution over the
$H_i$; thus one can sample an $H_i$ proportionally to its weight. For each
edge, sample in this way and include the edge in $G_t$ if it is present in the
drawn $H_i$. 

\textbf{Event $A(G_t)$}: After the graph $G_t$ is produced, run a clustering
algorithm $A$ on it to produce a clustering $A(G_t)$. In this paper we fix $A$
to be the Walktrap algorithm \cite{Walktrap}, though we have observed the
effectiveness of other clustering algorithms as well. In general $A$ can be any
event, and in this case we tie it to the application by making it a simple
clustering algorithm.

\textbf{Local quality measure}: Define a \emph{local quality measure}
$q(G,e,c)$ to be a $[0,1]$-valued function of a graph $G$, an edge $e$ of $G$,
and a clustering $c$ of the vertices of G. The quality of $(u,v)$ in $G_t$ is
the ``reward'' for that edge, and it is used to update the weights of each
input graph $H_i$.  More precisely, the reward for $(u,v)$ in round $t$ is
$q(G_t, (u,v),A(G_t))$.

\textbf{Update Rule}: Update the weights using MWUA as follows. Define two
learning rate parameters $\varepsilon > 0, \nu > 0$, with the former being used
to update edges from $G_t$ that are present in $H_i$ and the latter for edges
not in $H_i$. In particular, suppose $q_{u,v}$ is the quality of the edge
$(u,v)$ in $G_t$. Then, the update rule is defined as follows:
\[
w_{u,v,i}=
\begin{cases}
w_{u,v,i}(1 +\varepsilon q_{u,v}), & \text{if } (u,v) \in H_i \\
w_{u,v,i}(1 - \nu q_{u,v}), & \text{if } (u,v) \not \in H_i .
\end{cases}
\]
 
\subsection{Quality Measures for Community Detection}
\label{sec:quality-measures}
We presently describe the two local quality measures we use for community
detection. The first, which we call {\em Edge Consistency} ($EC$) captures the
notion that edges with endpoints in the same cluster are superior to edges
across clusters:

\[
   EC_{u,v}=
   \begin{cases}
   1, & \text{if  }c(u) = c(v) \\
   0,  & \text{if  }c(u) \neq c(v).
   \end{cases}
\]
$EC$ offers a quality metric that is inextricably tied to the performance of
the chosen clustering algorithm.  The idea behind edge consistency can also be
combined with any quality function $q$ to produce a ``consistent'' version of
$q$. Simply evaluate $q$ when the edge is within a cluster, and $-q$ when the
edge is across clusters. Note that $q$ need not depend on a clustering of the
graph or the clustering algorithm, and it can represent algorithmic-agnostic
measures of clustering quality.

As an example of such a measure $q$, we consider the metric of
\emph{Neighborhood Overlap} ($NO$), which uses the idea that vertices that
share many neighbors are likely to be in the same community. NO declares that
the quality of $(u,v)$ is equal to the (normalized) cardinality of the
intersection of the neighborhoods of $u$ and $v$: 

\[ 
   NO_{u,v}=\frac{|N(u) \cap N(v)|}{|N(u) \cap N(v)| + log(|V|)}, 
\] 
where $N(x)$ represents the neighborhood of vertex $x$. We have also run
experiments using more conventional normalizing mechanisms, such as the Dice
and Jaccard indices~\cite{Dice1945,Jaccard1912}), but our neighborhood overlap
metric outperforms them by at least 10\% in our experiments. We argue this is
due to the use of a global normalization factor, as opposed to a local one,
which is what Dice and Jaccard indices use. 
For brevity and simplicity, we omit our results for Jaccard and Dice indices and
focus on Neighborhood Overlap. In our experimental analysis
(Section~\ref{sec:results}) we use the consistent version of $NO$, which we
denote \emph{consistentNO}. 

While we demonstrate the utility of the LBGA framework by using $EC$ and
$consistentNO$, the design of the framework is modular, in that the mechanism
for rewarding the ``right'' edges is independent from the definition of reward.
This allows us to plug in other quality metrics to guide the graph
representation learning process for other applications, a key goal in LBGA's
design.

\subsection{LBGA Implementation} 
Processing every edge in every round of the LBGA framework is inefficient.
Our implementation of LGBA, given by Algorithm~\ref{alg:nef}, improves
efficiency by fixing edges whose weights have grown so extreme so as to be
picked with overwhelming or negligible probability (with probability $ >
1-\delta$ or $< \delta$ for a new parameter $\delta$). In practice this
produces a dramatic speedup on the total runtime of the
algorithm.\footnote{From days to minutes in our experiments.} The worst-case
time complexity is the same, but balancing parallelization and the learning
parameters suffices for practical applications. 

In addition, our decision to penalize non-edges ($\nu > 0$) also improves
runtime from the alternative ($\nu = 0)$. In our experiments non-edge feedback
causes $G_t$ to convergence in roughly half as many rounds as when only
presence of edge is considered as indication of relational structure.

We also note that Algorithm \ref{alg:nef} stays inside the ``boundaries'' 
determined by the input graphs $H_i$.  It never considers edges that are not
suggested by \emph{some} $H_i$, nor does it reject an edge suggest by all 
$H_i$. Thus, when we discuss sparsity of our algorithm's output in our
experiments, we mean with respect to the number of edges in the union of the
input graphs.

\begin{algorithm}[tbh]
\caption{Optimized implementation of LBGA. Note that $1_E$ denotes the
characteristic function of the event $E$.}
\label{alg:nef}
   \DontPrintSemicolon
   \SetAlgoLined
   {\footnotesize
   \KwData{Unweighted graphs $H_1, \dots, H_m$ on the same vertex set $V$, a
clustering algorithm $A$, a local quality metric $q$, three parameters
$\varepsilon, \nu, \delta > 0$}
   \KwResult{A graph $G$}
   Initialize a vector $\mathbf{w}_{u,v} = \mathbf{1}$ for all $u \neq v \in V$\;
   Let $U$ be the edge set of $H_1 \cup \dots \cup H_m$\;
   Let $G_\textup{learned} = (V, \varnothing)$ \;
   \While{$|U| > 0$}{
      Let $G$ be a copy of $G_{\textup{learned}}$\;

      \For{$(u,v) \in U$}{
         Let $p_{u,v} = \frac{\sum_i w_{u,v,i} 1_{\left \{(u,v) \in H_i \right \}}}{\sum_i w_{u,v,i}}$ \;
         Flip a coin with bias $p_{u,v}$\;
         If heads, include $(u,v)$ in $G$.
      }

      Cluster $G$ using $A$\;

      \For{$(u,v) \in U$}{
         Set $p = q(G, A(G), (u,v))$\;
         \For{$i = 1, \dots, m$}{
            \eIf{$(u,v) \in H_i$}{
               Set $w_{u,v,i} = w_{u,v,i} (1 + \varepsilon p)$\;
            } {
               Set $w_{u,v,i} = w_{u,v,i} (1 - \nu p)$\;
            }
         }

         Let $p_{u,v} = \frac{\sum_i w_{u,v,i} 1_{\left \{(u,v) \in H_i \right \}}}{\sum_i w_{u,v,i}}$ \;
         \If{$p_{u,v} > 1-\delta$}{
            Add $(u,v)$ to $G_{\textup{learned}}$, remove it from $U$\;
         }
         \If{$p_{u,v} < \delta$}{
            Remove $(u,v)$ from $U$\;
         }
      }
   }
   Output $G$\;
}
\end{algorithm}

\section{Experimental Analysis}
\label{sec:experiments}
We presently describe the datasets used for analysis and provide quantitative
results for the performance of Algorithm \ref{alg:nef}. 

\subsection{Synthetic Datasets}
\label{sec:synthetic-model}

Our primary synthetic data model is the stochastic block model \cite{Wang87},
commonly used to model explicit community structure.  We construct a
probability distribution $G(\mathbf{n},B)$ over graphs as follows. Given a
number $n$ of vertices and a list of cluster (block) sizes $\mathbf{n}=\{n_1,
\dots, n_k\}$ such that $n =\sum_i n_i$, we partition the $n$ vertices into $k$
blocks $\{b_1, \dots, b_k\}$, $|b_i|=n_i$.  We declare that the probability of
an edge occurring between a vertex in block $b_i$ and block $b_j$ is given by
the $(i,j)$ entry of a $k$-by-$k$ matrix $B$. In order to simulate different
scenarios, we consider the following three cases.
 
{\em Global Stochastic Block Model (GSBM):} In this model we have $m$ input
graphs ${H_i,\ldots,H_m}$, each drawn from the stochastic block model
$G(\mathbf{n}, B_i)$ \footnote{$G(\mathbf{n}, B_i)$ represents a simpler case
of the stochastic block model, where the within-cluster probabilities are
uniform across blocks and blocks have the same size.}, with $n_1 = \dots = n_m$
and $B_i$ defined as:

\[
   B_i = 
   \begin{pmatrix}  
      p_i        &  r_i        &  r_i        & \dots  &  r_i\\
      r_i        &  p_i        &  r_i        & \dots  &  r_i \\
      \vdots     & \vdots      & \vdots      & \ddots &  \vdots      \\
      r_i        &  r_i        & r_i         & \dots  &  p_i
   \end{pmatrix},
\]
where $p_i$ represents the within-cluster edge probability and $r_i$ represents
the across-cluster edge probability in graph $H_i$. The ratio $SNR=p_i/r_i$ is
commonly referred to as the {\em signal to noise} ratio and captures the
strength of community structure within $H_i$. We use the GSBM case to model a
scenario where each graph source has a global (or uniform) contribution toward
the quality of the targeted graph representation $G^*$.

{\em Local Stochastic Block Model (LSBM):} This scenario captures the notion
that one graph source accurately describes one community, while another source
fares better for a different community. For example, if we have two underlying
communities, and two graph sources $H_1, H_2$, then we use the following two
block matrices to represent them:

\[
B_1=\begin{pmatrix}
p & r \\
r & r
\end{pmatrix},
\hspace{0.5cm}
B_2=\begin{pmatrix}
r & r \\
r & p
\end{pmatrix}
.\]

This naturally extends to a general formulation of the LSBM model for $m$
communities.

{\em \er (ER) model:} Finally, we consider the case of the \er random
graph~\cite{Erdos60}, where any two vertices have equal probability of being
connected. This model provides an example of a graph with no community
structure. Note that the ER model is a special case of both GSBM and LSBM with
$p=r$. In our experimental analysis we consider cases where an ER model is
injected into instances of GSBM and LSBM in order to capture a range of
structure and noise combinations.

\begin{table*}
\centering
\resizebox{0.7\textwidth}{!}{
\begin{tabular}{| l | l |}
\hline
Dataset & Parameters \\
\hline \hline
GSBM-1& $m=k=4$,$n_i=125$, $p_i=0.2$, $r_i=0.05$, $i=1, \ldots,m$ \\
GSBM-2& $m=k=4$, $n_i=125$, $p_i=0.3$, $r_i=0.05$, $i=1, \ldots,m$ \\
GSBM-3& $m=5, k=4$, $n_i=125$, $p_i=0.3$, $r_i=0.05$, $i=1, \ldots,4$, $p_5 = r_5 = 0.01$ \\
\hline 
GSBM-4& $m =k= 4$, $n_i=125$,$p_1=0.1625,p_2 = 0.125,p_3 = 0.125,p_4 = 0.0875$,$r_i = 0.05$, $i=1, \ldots,m$\\
GSBM-5 & $m=k=4$, $n_i=125$,$p_1=0.15,p_2=0.1,p_3=p_4=0.05$, $r_i = 0.05$, $i=1, \ldots,m$\\

\hline
LSBM-1& $m=k=4$, $n_i=125$, $p_i=0.2$, $r_i=0.05$, $i = 1, \dots, m$ \\
LSBM-2 & $m=k=4$, $n_i=125$, $p_i=0.3$, $r_i=0.05$, $i = 1, \dots, m$ \\
LSBM-3& $m=5,k=4$, $n_i=125$, $p_i=0.3$, $r_i=0.05$, $i = 1, \dots, m$, $p_5= r_5 = 0.01$ \\
\hline
ER only &  $m=4$, $p_i=r_i=0.01$ \\
DBLP & $n = 3153, m = 2$ \\
RealityMining & $n = 90, m = 6$ \\
Enron & $n = 145, m = 2, \alpha=0.9$ \\
\hline
\end{tabular}
}
\caption{Description of datasets analyzed. Total number of vertices in each
synthetic source graph is $n=500$.  $m$ is the number of graph sources. $k$ is the number of clusters. $n_i$ represents
number of vertices in cluster $i$. $p_i$ and $r_i$ represent the within- and
across-cluster edge probability for each the $m$ graph sources.}
\label{datasets}
\end{table*}

\subsection{Real Datasets} 
\subsubsection{DBLP}
Our first real-world dataset is DBLP \cite{Ley02}, a comprehensive online
database documenting research in computer science. We extracted the subset of
the DBLP database corresponding to researchers who have published at two
conferences: the Symposium on the Theory of Computing (STOC), and the Symposium
on Foundations of Computer Science (FOCS). The breadth of topics presented at
these conferences implies a natural community structure organized by sub-field.
Each node in the DBLP graph represents an author, and we use two graphs on this
vertex set: the {\em co-authorship} graph and the {\em title similarity} graph.
For the latter, we add an edge between two author vertices if any of their
paper titles contain at least three words in common (excluding stop words), and
the weight of this edge is the number of such pairs of papers. We considered a
total of 5234 papers across 3153 researchers.

\subsubsection{RealityMining}
Our second dataset is RealityMining \cite{RealityMining}, a 9-month experiment
in 2004 which tracked a group of 90 individuals at MIT via sensors in their
cell phones. The individuals were either associated with the MIT Media Lab or
the Sloan Business School, and there is a natural corresponding community
structure. The data collected include voice calls, bluetooth scan events at
five-minute intervals, cell tower usage, and self-reported friendship and
proximity data. The data set is naturally noisy: surveys are subjective
estimates, cell tower ranges are only so precise and signal outages are common,
and there was data loss from typical cell phone problems like running out of
battery. 

We used the subset of subjects participating between 2004-09-23 19:00:00 and
2005-01-07 18:00:00 (UTC-05:00), for a total of 3354 call events, 786301 cell
tower transition events, and 689025 bluetooth scan events. The nodes in our
graphs represent individuals in the study. Weighted edges correspond to the
total duration of voice calls, the total amount of time two individuals used
the same cell tower, the total number of bluetooth events, and the results of
the friendship/proximity surveys for a total of six graphs. Our results were
stable under somewhat drastic changes in the input graphs (for example, in
ignoring weights).

\subsubsection{Enron}
Our final dataset is the Enron email dataset~\cite{EnronConf, Enron}, a
well-studied corpus of over 600,000 emails sent between 145 employees of the
Enron Corporation in the early 2000's. We produced two graphs from the Enron
data, one for peer-to-peer email communication and one for topic similarity in
the email content. 

In both graphs the vertices are individuals. In the email graph, the edges are
weighted by the number of emails sent between the individuals in question. We
used the Mallet package~\cite{mallet} to generate the LDA topic model for the
content of Enron email data. We aggregated into one document all the email
content sent by each of the Enron employes considered in the email link graph.
Each document and therefore each sender is represented by 60 topics. We measure
cosine distance of the topic vectors of individuals, and considered an edge as
present if the cosine distance was above a specified threshold value
$\alpha$.\footnote{We experimented with various threshold values, and we
discuss this in Section~\ref{sec:enron-results}.}

Table~\ref{datasets} contains a summary of all the datasets used for
the experimental analysis and their parameters. 

\subsection{Validation Procedure} 
\label{sec:validation}
In our work, the optimality of the graph representation is closely coupled with
the quality of community structure captured by the representation. This gives
us several ways of evaluating the quality of the results produced by our
algorithm. We consider notions of quality reflected at different levels: the
quality of cluster assignment, the quality of graph representation, and the
quality of graph source weighting. 

{\em Quality of Cluster Assignment:} Since the output of LBGA is a graph, we
use the walktrap clustering algorithm to extract communities for analysis. We
then compare these communities to the ground truth clustering, when it is
available, or else to the known features of the datasets.  We use the
Normalized Mutual Information (NMI) measure \cite{Danon05} to capture how well
the ground truth clustering overlaps with the clustering on the graph
representation output from our algorithm. In general we find that the choice of
clustering algorithm is unimportant, because the graph output by LBGA is
sufficiently modular to admit only one reasonable clustering. Walktrap is
further convenient in that we need not assume the number of clusters ahead of
time.

{\em Quality of Graph Representation:} An ideal graph representation that
contains community structure would consist of disjoint cliques or near-cliques
corresponding to the communities. As we illustrate in
Section~\ref{sec:results}, an optimal graph representation can do better than
just produce a perfect clustering. It can also remove cross-community edges and
produce a sparser representation, which is what our algorithm does. We use two
measures of clusterness to capture this notion of graph representation quality.
Modularity \cite{Newman06} is a popular measure that compares a given graph and
clustering to a null model. Conductance~\cite{Leskovec2008,Gleich2012} measures
relative sparsity of cuts in the graph. Since conductance is defined for a
single cut, we compute it for a clustering as the sum of the conductance values
of cuts defined by isolating a single cluster from the rest of the graph. Note
that \emph{higher} modularity scores and \emph{lower} conductance scores
signify stronger community structure Both modularity and conductance are
well-known and often offer complimentary information about the quality of
communities.
 
We note two extreme graph representation cases, the empty graph which is
perfectly modular in a degenerate sense, and the union graph which is a trivial
aggregation. To signal these cases in our results, we display the
\emph{sparsity} of the produced graph $G^*$, defined as the fraction of edges
in $G^*$ out of the total set of edges in all input graphs. 

{\em Quality of Graph Source Weighting:} the quality of the aggregation process
is captured by the right weighting of individual edge sources. Edge sources (input
graphs) that are more influential in uncovering the underlying community
structure have higher weights on average. Similarly, edge types that contribute
equally should have equal weights, and edge types with no underlying structure
should have low weights.

\begin{figure}[t]
\begin{centering}
\includegraphics[width=\columnwidth]{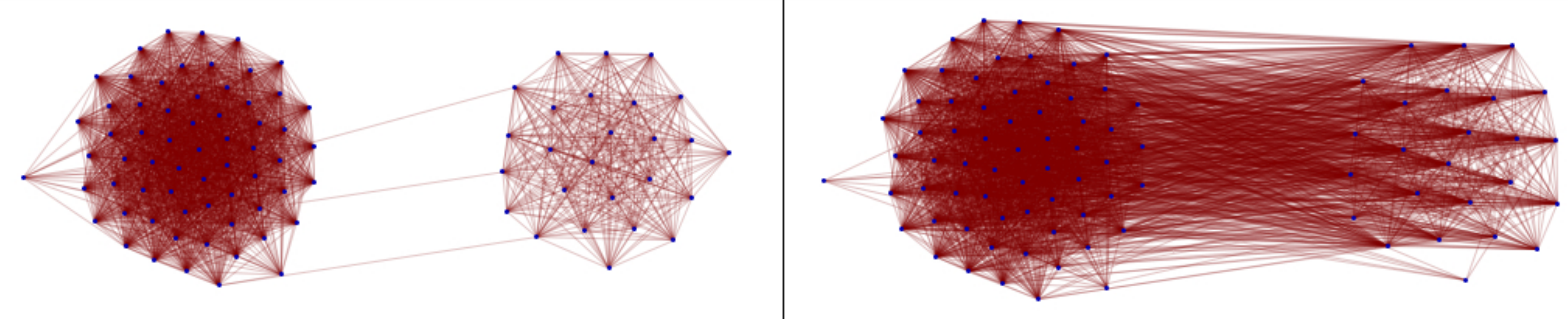}
\par\end{centering}
\caption{Left: the results of LBGA on the RealityMining dataset. Right: the
input graph of Bluetooth scan events. LBGA was run with $consistentNO$, $\nu =
\varepsilon = 0.2$, $\delta = 0.05$} 
\label{fig:reality-mining-comparison}
\end{figure}

\begin{figure}[t]
\begin{centering}
\includegraphics[width=\columnwidth]{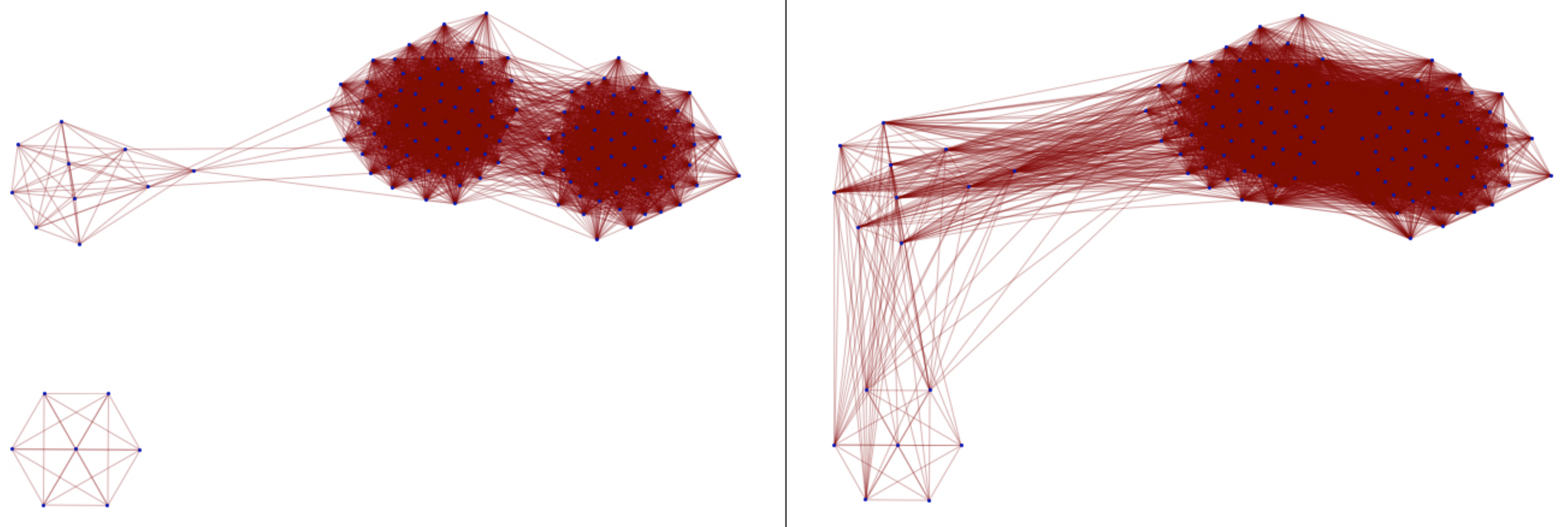}
\par\end{centering}
\caption{Left: the results of LBGA on the Enron dataset. Right: the input graph
of topic models thresholded at 0.9. LBGA was run with $consistentNO$, $\nu =
\varepsilon = 0.2$, $\delta = 0.05$} 
\label{fig:enron-comparison}
\end{figure}

\subsection{Experimental Results}
\label{sec:results}

Table~\ref{EC_NO} contains the numerical results of our experiments. As a
baseline, we computed the modularity values of the union of the input
graphs with respect to the ground truth (for synthetic) or the Walktrap
clusterings (for the real world). We now discuss the specific results for the
synthetic and real data sets. 

\subsubsection{Synthetic}
For illustration, we show in Figure \ref{fig:local-sbm} the performance of
Algorithm \ref{alg:nef} when $consistentNO$ is used as a local quality metric
and LSBM-3 (see Table \ref{datasets} for details) is used to generate the input
graphs. Note that the algorithm converges quickly to a graph which results in a
perfect clustering as measured by NMI. We also plot the modularity of the
resulting graph produced in each round, seeing that it far exceeds the
``baseline'' modularity of the union of the input graphs. This tells us the
learning algorithm is able to discard the noisy edges in the model. Finally, we
plot the number of edges in the graph produced in each round, and the average
vertex-pair weight for each input graph. This verifies that our algorithm
complies with our edge-type weighting and sparsity requirements. Indeed, the
algorithm produces a relatively sparse graph, using about 40\% of the total
edges available and weights edges from the \er source appropriately.  Our
algorithm hence achieves a superior graph than the union, while preserving the
underlying community structure so as to be amenable to clustering. 

The results for the other synthetic datasets are similar and summarized in
Table~\ref{EC_NO}. We note that our algorithm's performance degrades when the
noise becomes too high. In Section~\ref{sec:sensitivity-analysis} we analyze
the signal to noise ratio in the synthetic data sets more closely.

\subsubsection{DBLP}
In Figure~\ref{fig:dblp}, we show results for the DBLP dataset. For both edge
consistency and $consistentNO$, our algorithm converges to a graph of modularity
exceeding that of the union graph and using significantly fewer edges (60\% in
the case of $consistentNO$ and 88\% for edge consistency).

Our algorithm selects title similarity as having more influence in recovering
communities for the STOC/FOCS conferences. Researchers attending these
conferences represent a small community as a whole with many of them sharing
co-authorship on papers with diverse topics. In this sense, it is not
surprising that title similarity serves as a better proxy for capturing the
more pronounced division along topics. We have also manually inspected the
resulting clusters, and most appear organized both by membership and
coauthorship. Take, for example, Mikko Koivisto, Thore Husfeldt, Petteri Kaski,
and Andreas Bj\"{o}rklund. They have together coauthored over 15 papers in
combinatorial optimization, and naturally fall within the same small
coauthorship cluster. However, the title similarity graph alone yields around
1500 clusters, and these researchers are split across two clusters because of
the differences in their non-coauthored work. They fall in the same cluster in
the LBGA-aggregated graph, and the cluster is larger, including well-known
researchers who are either coauthors with some of the four or have done much
work in the same field. Though this is a promising sign, we suggest a more
thorough quantitative analysis of the quality of this clustering for future
work.

\subsubsection{RealityMining}
Much work has been done in manually constructing good graph representations for
the RealityMining social network (e.g. aggregating Bluetooth via thresholding
and picking useful time windows \cite{RealityMining,Caceres2011}).  LBGA,
however, arrives at an equally good representation from the raw, noisy input
graphs. The final graph it constructs contains two dense clusters corresponding
exactly to the MIT Media Lab and the Sloan Business School, with only three
edges crossing the cut. The final group uses 63.5\% of the total edges
available and has modularity 0.25 with respect to the Media/Sloan partition.
See Figure \ref{fig:reality-mining-comparison} for a comparison of the original
Bluetooth graph and the final produced graph.

\subsubsection{Enron}
\label{sec:enron-results}
The data of Table~\ref{EC_NO} shows that $consistentNO$ achieves a graph
representation with higher modularity, lower conductance and better sparsity
when compared to the edge consistency measure.
Figure~\ref{fig:enron-comparison} shows a clear community structure, and there
the smaller clusters correspond to lower-level employees while the higher level
managers reside in the bigger clusters. Additionally, there was a known fantasy
sports community within the network, and all of these individuals fall within a
single cluster in the graph output by LBGA~\cite{Mccallum05}. 

We also investigated the effect of changing the threshold value $\alpha$ for
considering an edge in the topic graph. A value of less than $\alpha = 0.7$
always produced two large dense clusters with many noisy edges between them
(akin to the two large clusters in Figure~\ref{fig:enron-comparison}). In our
experiments we used $\alpha = 0.9$, although values of $\alpha$ as high as
$0.95$ gave qualitatively similar results.

\subsubsection{Comments}
Overall, we find that LBGA converges to graphs of both high modularity and low
conductance score. It also generates graph representations that induce correct
clusterings in almost all cases where some sort of ground truth is known, the
challenging case being when SNR is low.  Moreover the algorithm weights the
different input graphs appropriately to their usefulness. We find that the edge
consistency measure outperforms neighborhood overlap in terms of overlap with
ground truth clustering when the signal to noise ratio is particularly low, but
that in almost all other cases (especially the real data sets), $consistentNO$
produces higher quality, sparser, and more modular graphs.

We notice that the modularity values for RealityMining and the Enron data set
are significantly smaller after passing through LBGA when compared to the
baseline union values. We argue that this is due to the relatively small
clusters produced by LBGA, as it is a known shortcoming that modularity is not
an accurate measure when the communities are small~\cite{Fortunato07}. Indeed,
when conductance is used as a clustering quality measure LBGA significantly
outperforms the baseline union aggregation.
Figure~\ref{fig:reality-mining-comparison} shows the favorable structure of the
RealityMining dataset after being passed through LBGA (and the noisy input
data), and Figure~\ref{fig:enron-comparison} gives a similar picture for the
Enron dataset. 

\begin{figure}[t]
\begin{centering}
\includegraphics[width=\columnwidth]{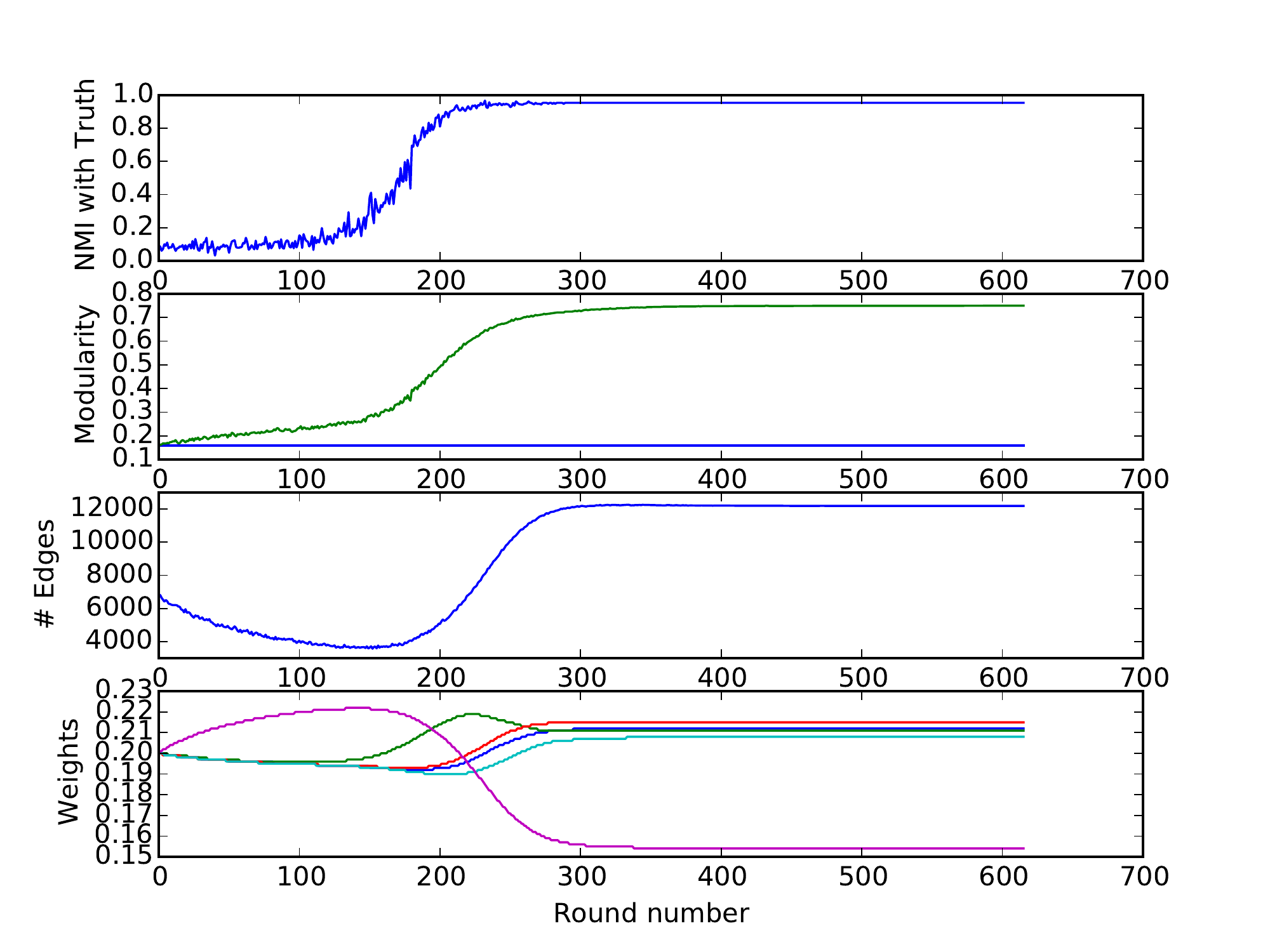}
\par\end{centering}
\caption{Graph representation learning for LSBM-3. The LBGA parameters are
$\varepsilon=\nu=0.2, \delta=0.05$. Plots in order top to bottom: 1. NMI of
$A(G_t)$ with the ground truth clustering, 2. modularity of $G_t$ w.r.t
$A(G_t)$, with the horizontal line showing the modularity of the union of the
input graphs w.r.t. ground truth, 3. the number of edges in $G_t$, 4.  the
average probability weight (quality) of vertex pairs for $H_i$.  The \er graph
converges to low weight by round 300.} 
\label{fig:local-sbm} 
\end{figure}

\begin{figure}[t]
\begin{centering}
\includegraphics[width=\columnwidth]{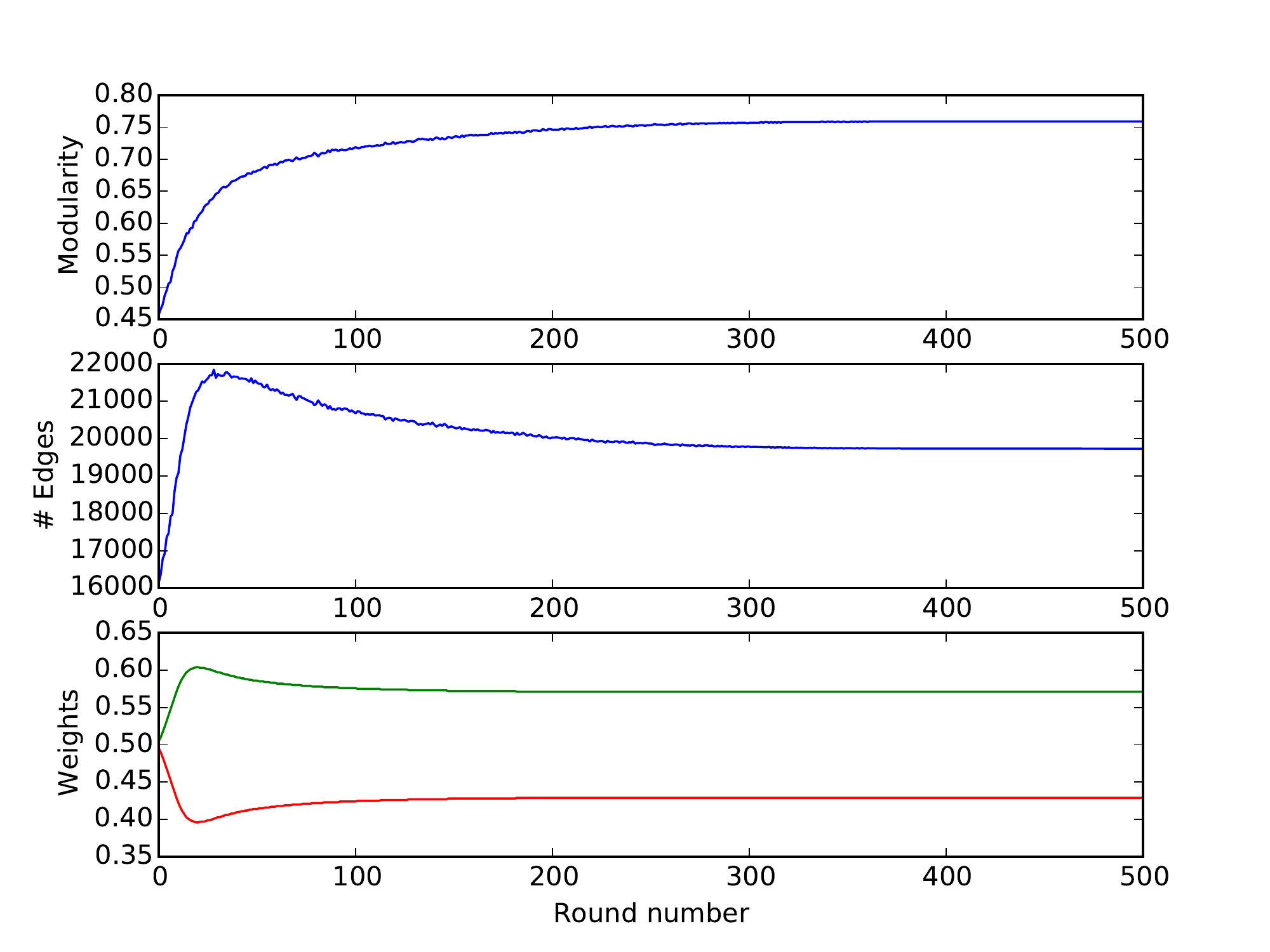}
\par\end{centering}
\caption{Aggregation of co-authorship (red curve) and title similarity graphs
(green curve) for DBLP dataset.} 
\label{fig:dblp}
\end{figure}

\begin{table*}
\centering
\resizebox{\textwidth}{!}{
\begin{tabular}{| l | c c  | c c c  r l | c c c  r l |}
\hline 
\multicolumn{1}{| l}{} &  \multicolumn{2}{|l|}{Union Graph} & \multicolumn{5}{c|}{EC} & \multicolumn{5}{c|}{ConsistentNO}\\
\hline
Dataset  & Modularity & Conductance &  Modularity & Conductance & NMI & Sparsity & Edge Type Weights  &  Modularity & Conductance & NMI & Sparsity & Edge Type Weights \\
\hline \hline
GSBM-1 & 0.264  & 7.568 &   0.549 & 2.042 &  1     &  0.644 &  (0.250,0.251,0.250,0.249) &    0.750  & 0.002 & 1     &  0.515    & (0.250,0.251,0.249,0.249)\\
GSBM-2 & 0.323   & 5.887 & 0.580 & 1.586 & 1     &  0.691 &  (0.252,0.250,0.248,0.251) &    0.750  & 0 & 1     &  0.573    & (0.252,0.250,0.247,0.251)\\ 
GSBM-3 & 0.312   & 6.140 & 0.607 & 1.333 & 1     &  0.657 &  (0.225,0.224,0.226,0.227,0.098) &   0.750 & 0 & 1     &  0.562    & (0.221,0.221,0.222,0.223,0.113)\\
\hline
GSBM-4 & 0.143 & 12.212 & 0.421 & 2.659 & 0.966 & 0.585 & (0.202,0.232,0.265,0.302)  & 0.750 & 0.001 & 0.983	 & 0.393 &  (0.202,0.231,0.266,0.302)\\
GSBM-5 & 0.145 & 6.584 & 0.395 & 2.124 & 0.919 & 0.653 & (0.213,0.282,0.361,0.144) & 0.666  & 0.001 & 0.958 & 0.477 & (0.199,0.271,0.348,0.182)\\
\hline
LSBM-1 & 0.111  & 14.288 &   0.298 & 4.316 &  0.765 &  0.651 &  (0.253,0.250,0.250,0.248) &    0.378  & 122.601 & 0.032 &  0.060     & (0.249,0.251,0.250,0.250)\\
LSBM-2 & 0.167  & 11.106 &   0.464 & 3.200 &  0.975     &  0.582 &  (0.249,0.251,0.248,0.252) &    0.750  & 0.001 & 1 &  0.417   & (0.250,0.250,0.248,0.252)\\
LSBM-3 & 0.162  & 11.701 &   0.473 & 3.009 &  0.966 &  0.568 &  (0.218,0.217,0.222,0.219,0.124) &    0.750  & 0.001 & 0.968 & 0.395    & (0.212,0.212,0.213,0.209,0.154)\\
\hline
ER only     & -0.002 & 24.729 &   0.193 & 112.947 &  0.012 &  0.999  &  (0.264,0.234,0.260,0.243) &    0.836  & 1.068 & 0.025 &  0.230     & (0.251,0.253,0.248,0.247)\\
\hline
DBLP   & 0.386 & 1368.859 &   0.372 & 1214.824 &  NA    &  0.962 &  (0.270,0.730) &    0.695  & 159.286 & NA    &  0.632    & (0.432,0.568)\\
RealityMining & 0.452 & 70.314 &   0.196 & 1.538 &  NA  &  0.724 &  (0.394,0.080,0.226,0.100,0.100,0.100) &  0.246 & 0 & NA    &  0.646    & (0.365,0.091,0.198,0.115,0.115,0.115) \\
Enron  & 0.559 & 134.572 &  0.190 & 11.092 &  NA    &  0.921 &  (0.193,0.807) &    0.444  & 0.594 & NA    &  0.631    & (0.390,0.610)\\
\hline
\end{tabular}
}
\caption{LBGA performance results. All datasets in this table were run with EC
and $consistentNO$ using $\varepsilon = \nu = 0.2, \delta = 0.05$. Union
modularity and conductance for real datasets was computed with the walktrap
clustering. The order of edge type weights for the real datasets are: DBLP
(coauthorship, title similarity); RealityMining (bluetooth, phone calls, cell
tower proximity, reported friendship, in-lab proximity, out-lab proximity);
Enron (email, topic similarity).} 
\label{EC_NO}
\end{table*}

\section{Robustness and Scalability}
\label{sec:additional-analysis}

\subsection{Sensitivity Analysis} \label{sec:sensitivity-analysis} We analyze
the sensitivity of LBGA to noise. In Figure~\ref{fig:sensitivity-analysis} we
display performance as measured by NMI when the graph inputs are LSBM models
across different intra-cluster probability values $p_i$ and varying SNR values.
We make the following general observations.  The algorithm is consistent in
that as the noise rate $r_i$ increases, the NMI values do deteriorate as
expected. The algorithm both reaches higher quality and maintains the quality
longer for denser graphs, which again is consistent with our expectations. It
seems the critical SNR (when the NMI drastically drops) is higher for
consistentNO than for EC. At a signal to noise ratio of 2 or less, the NMI is
bad for both quality measures and all choices of $p_i$. The sharp drop in
quality is related to the well-known  phase transition in the community
detection problem \cite{nadakuditi2012}.  Therefore, the distance from the
theoretical detectability bound on community detection could be used as an
agnostic measure to gauge the usefulness of LBGA with a particular quality
metric.

\begin{figure}[bht]

\begin{subfigure}{0.01\columnwidth}
     \includegraphics[scale=0.7]{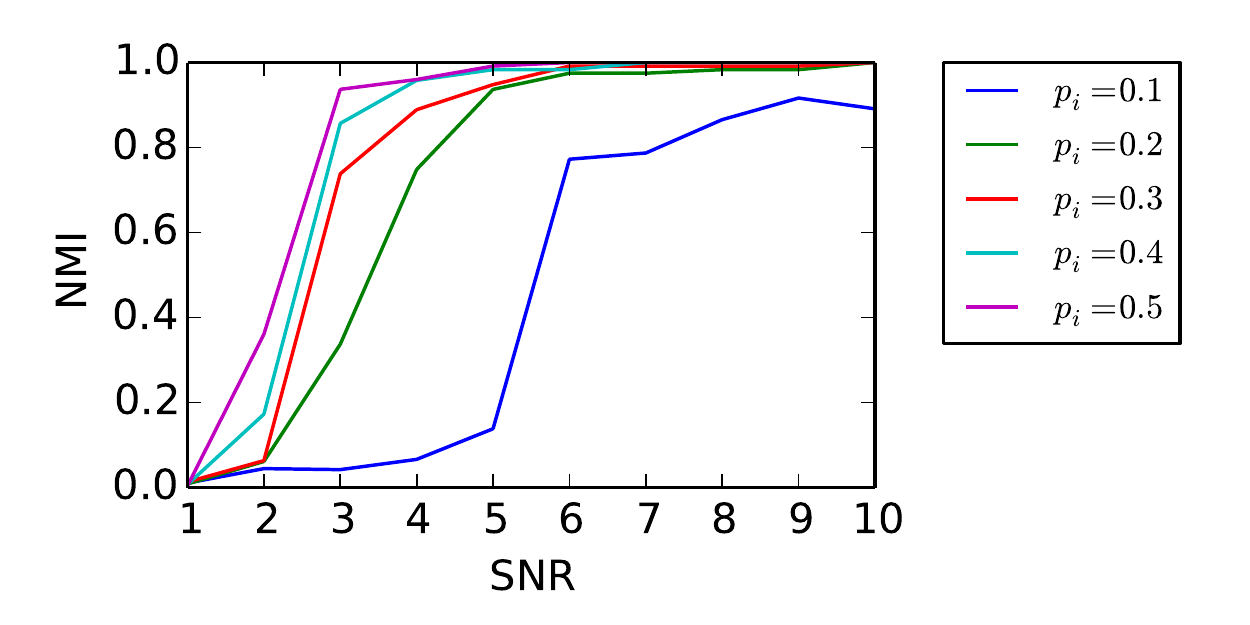}
\end{subfigure}    

\begin{subfigure}{0.01\columnwidth}
    \includegraphics[scale=0.7]{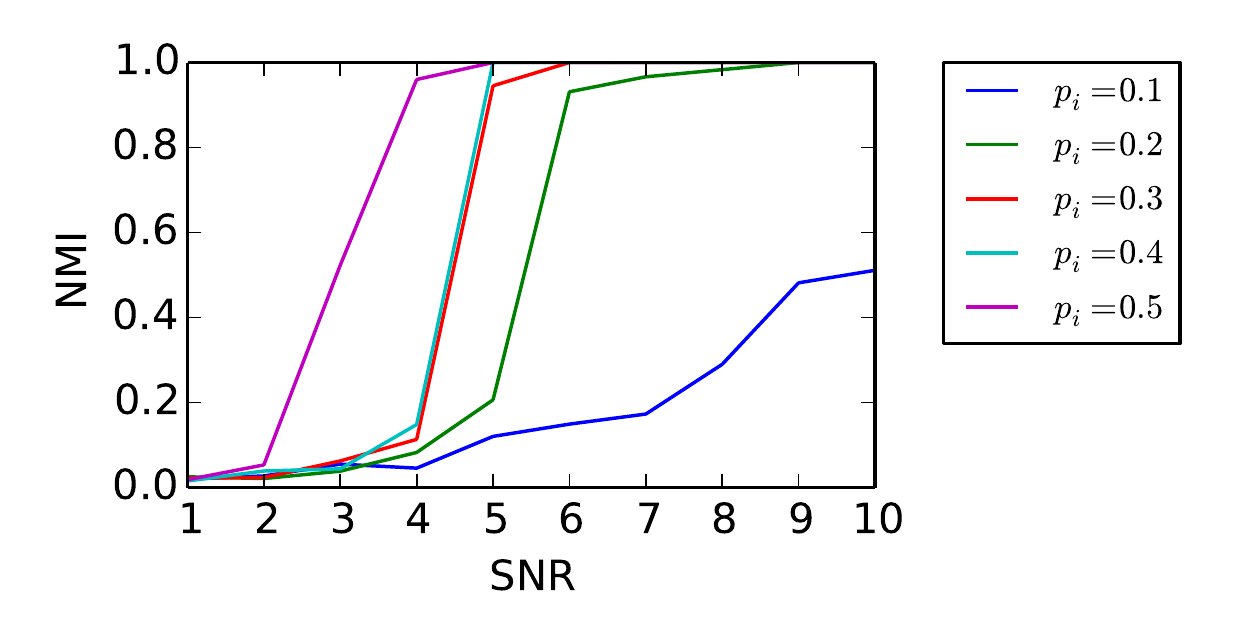}
\end{subfigure}

\caption{Performance of LBGA (measured by NMI) as a function of SNR for the
LSBM model with different probabilities $p_i$ for the edge consistency measure
(top) and $consistentNO$ (bottom).} 
\label{fig:sensitivity-analysis}
\end{figure}

\subsection{Scalability Analysis}
\label{sec:scalability-analysis}

We run a larger version of the LSBM-2 model ($m=k=10,p_i=0.3,r_i=0.05$) to
illustrate how LBGA scales for larger graphs. As shown in
Figure~\ref{fig:scalability-analysis}, LBGA scales linearly with the number of
edges. Given that real-world graphs are usually sparse, this scaling behavior
makes LBGA computationally suitable for large graphs.

LBGA was designed and implemented with scaling in mind, and the process of
fixing vertex-pairs as they are learned is largely the reason for the nice
scaling propertes of Algorithm~\ref{alg:nef}. Moreover, the parameter $\delta$
encapsulates a trade-off between runtime and accuracy. Should linear scaling be
insufficient, LBGA's design allows for additional modifications to improve
scalability.  For example: quality functions that are sufficiently local allow
one to parallelize the weight update step; one could sample a sublinear number
of edges in each round and only update weights within the subgraph; one might
specify a set of ``seeds'' (vertices of interest), and sample edges local to
those vertices. While these methods are currently just potential directions for
future work, there is clearly much potential for further improvements of LBGA's
computational efficiency.

\begin{figure}[bht]
\begin{centering}
\scalebox{0.7}{\includegraphics{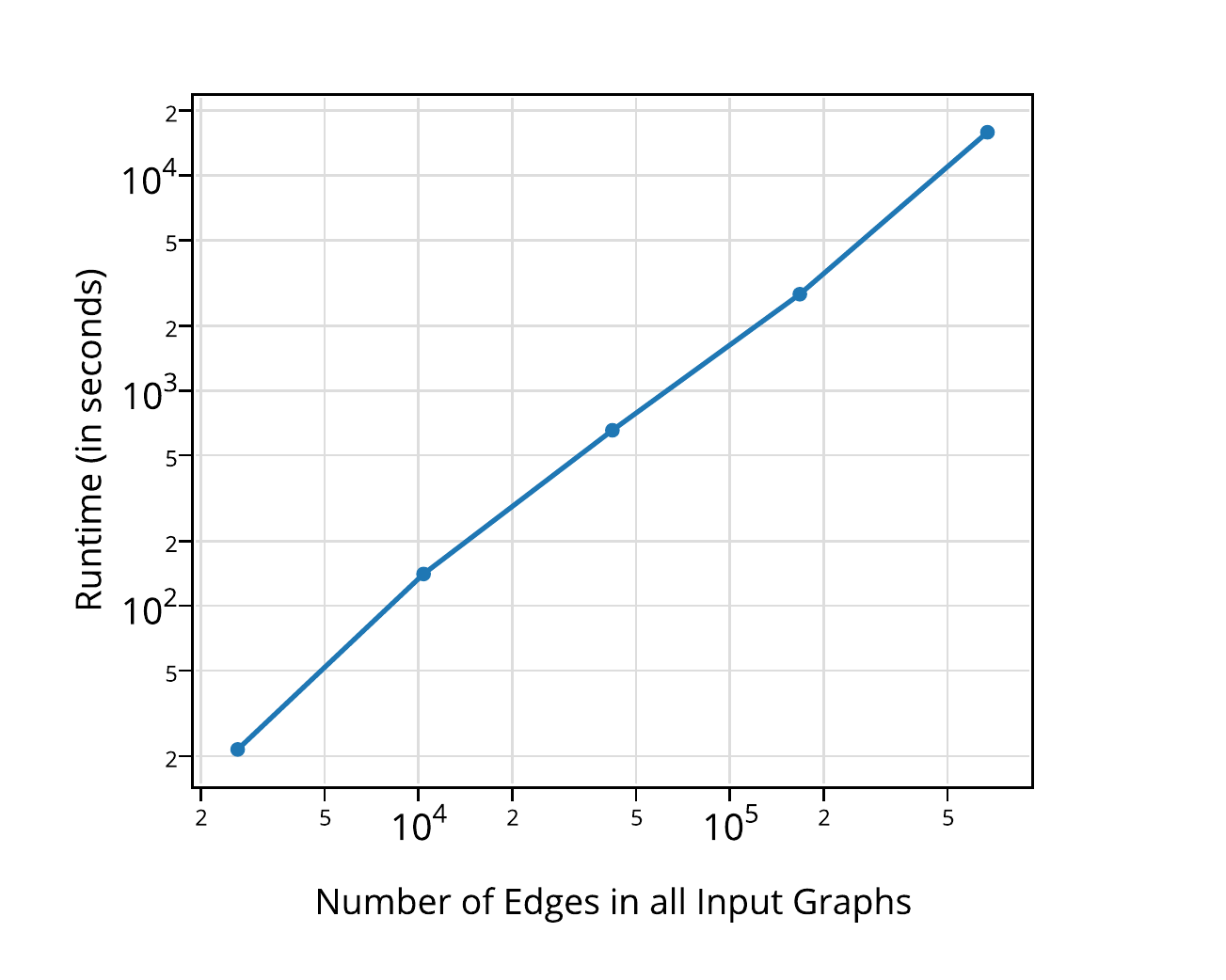}}
\par\end{centering}
\caption{The runtime of LBGA as a function of number of edges on the union of input graphs.} 
\label{fig:scalability-analysis}
\end{figure}

\section{A convergence theorem}
\label{sec:convergence-theorem}

In this section we provide a theorem on the convergence rate of LBGA in the
case that the quality function actually has access to the ground truth
clustering. This is an admittedly unrealistic scenario, but it serves a few
purposes: it is the simplest possible theorem one might hope to prove, it is a
demonstration of how standard MWUA analysis can be used for our problem, and it
makes the theoretical goals and assumptions rigorous.

Let $H_1, \dots, H_N$ be graphs on the same vertex set $V$, and $c$ an unknown
target clustering of $V$. Call $H_i$ \emph{good} for a vertex pair $(u,v)$ if
it agrees with $c$, i.e. $(u,v)$ is both an edge of $H_i$ and $c(u) = c(v)$, or
$(u,v)$ is not an edge of $H_i$ and $c(u)\neq c(v)$. Otherwise call $H_i$
\emph{bad} for $(u,v)$. 

Let $\delta > 0$ be the learning target, so that an edge is considered
``learned'' if the probability $p_i^t$ of choosing it in round $t$ either drops
below $\delta$ or rises above $1 - \delta$. Define the oracle-type quality
function

$$
   q_c(u,v) = 
      \begin{cases}
          1 & \mbox{ if } c(u) = c(v) \\ 
         -1 & \mbox{ if } c(u) \neq c(v) 
      \end{cases}
$$

The theorem is that with access to $q_c$ LBGA will produce the correct graph
after $O(\log(1/\delta))$ rounds. Before we prove this we set up a bit of
notation.

Fix a vertex pair $(u,v)$. Let $w_{u,v,i}$ be the weight of $H_i$ that LBGA
maintains. Define $w_{u,v,\textup{good}} = \sum_{\textup{good } H_i} w_i$ to be
the sum of the weights of the good graphs for $(u,v)$, and
$w_{u,v,\textup{bad}} = \sum_{\textup{bad } H_i} w_i$ the sum of the bad
weights. Call $p_{u,v,\textup{good}}, p_{u,v,\textup{bad}}$ the corresponding
probability weights of picking a good or bad input graph. When $u,v$ are clear
from the context, we will omit them from the subscripts and simply write
$w_{\textup{good}}$, etc. 

\begin{thm}
Suppose LBGA uses the quality funciton $q_c$, a null event $A$, and parameters
$0 < \varepsilon = \nu < 1/2$. Suppose further that for every vertex pair
$(u,v)$ some input graph $H_i$ is good for $(u,v)$. Then for any $0 < \delta <
1$ and for all vertex pairs $(u,v)$, after $O(\log(1 / \delta))$ rounds the
probability $p_{u,v,\textup{bad}} < \delta$.  
\end{thm}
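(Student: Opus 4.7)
The plan is to reduce the theorem to a deterministic geometric-rate calculation, since the oracle quality $q_c$ strips all randomness out of the weight updates. The key observation is that $q_c(u,v)$ depends neither on the sampled graph $G_t$ nor on the clustering $A(G_t)$, so for any fixed vertex pair $(u,v)$ and input graph $H_i$ the multiplicative factor applied to $w_{u,v,i}$ is the same in every round, determined only by whether $H_i$ is good or bad for $(u,v)$.

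First I would do the casework on the four combinations of $c(u)=c(v)$ versus $c(u) \neq c(v)$ and $(u,v) \in H_i$ versus $(u,v) \notin H_i$. Under the hypothesis $\varepsilon = \nu$, both cases in which $H_i$ is good for $(u,v)$ yield the update factor $(1+\varepsilon)$ (the in-edge case gives $1+\varepsilon \cdot 1$; the non-edge case gives $1-\nu \cdot (-1)$), and symmetrically both bad cases yield $(1-\varepsilon)$. This uniformity of the update factor is the essential ingredient; everything after it is arithmetic.

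Next, because all weights are initialized to $1$, after $t$ rounds one has
\[
w_{u,v,\textup{good}} = N_g (1+\varepsilon)^t, \qquad w_{u,v,\textup{bad}} = N_b (1-\varepsilon)^t,
\]
where $N_g \geq 1$ by the hypothesis that some $H_i$ is good for $(u,v)$, and $N_b \leq N$. Substituting into $p_{u,v,\textup{bad}} = w_{\textup{bad}}/(w_{\textup{good}} + w_{\textup{bad}})$ and rearranging $p_{\textup{bad}} < \delta$ reduces the claim to
\[
\left(\frac{1-\varepsilon}{1+\varepsilon}\right)^t < \frac{\delta}{1-\delta} \cdot \frac{N_g}{N_b}.
\]
Since $(1-\varepsilon)/(1+\varepsilon) < 1$ is bounded strictly away from $1$ for every fixed $\varepsilon \in (0,1/2)$, taking logarithms shows that $t = O(\log(1/\delta))$ rounds suffice, with the hidden constant depending on $\varepsilon$ and $N$ but not on the particular pair $(u,v)$. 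Taking the maximum over all pairs gives the uniform bound stated in the theorem.

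The main obstacle, such as it is, lies in the casework verification that all four good/bad cases collapse to the two uniform factors $(1 \pm \varepsilon)$; once this observation is established, the geometric decay of the bad-to-good weight ratio finishes the argument. A minor bookkeeping point is that the framework in Section~\ref{sec:framework} describes $q$ as $[0,1]$-valued while $q_c$ takes values in $\{-1, +1\}$, but this is a harmless change of sign convention that is already absorbed into the $+\varepsilon$/$-\nu$ split of the update rule.
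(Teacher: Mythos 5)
Your proposal is correct and follows essentially the same route as the paper's proof: deterministic per-round factors of $(1+\varepsilon)$ for good graphs and $(1-\varepsilon)$ for bad ones, the closed form $w_{\textup{good}}^t = N_g(1+\varepsilon)^t$, $w_{\textup{bad}}^t = N_b(1-\varepsilon)^t$, and a logarithmic bound on $t$ (the paper simply uses the cruder bound $p_{\textup{bad}}^t \leq N_b/(1+\varepsilon)^t$ instead of your exact rearrangement). Your explicit four-case verification that $\varepsilon = \nu$ collapses the update factors is the step the paper asserts without writing out, so making it explicit is a welcome, not a divergent, addition.
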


\begin{proof}
Fix a vertex pair $(u,v)$. We add a superscript $t$ to any quantity that
changes over rounds to denote which round of LBGA the quantity refers to. Our
goal is to bound $p_{\textup{bad}}^t$ for $t = O(\log(1/ \delta))$. Denote by
$N_{\textup{bad}} = w_{\textup{bad}}^1, N_{\textup{good}} =
w^1_{\textup{good}}$, the number of bad and good input graphs, respectively. 

The update step in LBGA and the quality function $q_c$ are such that in every
round the weights of good graphs are multiplied by $(1 + \varepsilon)$ and the
weights of the bad graphs are multiplied by $(1 - \varepsilon)$. Hence, 

$$ w^{t+1}_{\textup{good}} = \sum_{H_i \textup{ is good}} w^t_i(1 + \varepsilon) =
w_{\textup{good}}^t (1 + \varepsilon)$$

and likewise for $w_{\textup{bad}}^{t+1}$. By induction, we have 

$$ w^t_{\textup{good}} = N_{\textup{good}} (1 + \varepsilon)^t,  
w^t_{\textup{bad}} = N_{\textup{bad}} (1 - \varepsilon)^t $$

and the value 

$$p_{\textup{bad}}^t =
\frac{N_{\textup{bad}}(1-\varepsilon)^t}{N_{\textup{bad}}(1-\varepsilon)^t +
N_{\textup{good}}(1+\varepsilon)^t} \leq \frac{N_{\textup{bad}}}{(1 +
\varepsilon)^t}$$
holds for all $t, N_{\textup{good}}, N_{\textup{bad}} \in \mathbb{N}$,
$\varepsilon > 0$. If we fix a stopping point $T > 0$ and require
$N_{\textup{bad}} / (1 + \varepsilon)^T \leq \delta$ then we solve for 

$$T \geq \frac{\log (1 / \delta) + \log(N_{\textup{bad}})}{\log (1 + \varepsilon)}
= O(\log(1/ \delta))$$
\end{proof}

Note that this result is deterministic. The oracle-type quality function allows
us to avoid a probabilistic analysis. So the natural generalization is to
weaken the assumption on $q$. For example, one might assume its value
correlates with the truth instead of being absolutely correct, or that its
usefulness is determined i.i.d. in each round. The end goal of this line of
inquiry would characterize how a suitable event $A$ can compensate for
increasingly weak and unreliable $q$, since this is the real-world scenario for
which we posit LBGA is useful.  This is an area for future work.

\section{Conclusions}
\label{sec:conclusion}
We present the Locally Boosted Graph Aggregation framework, a general framework
for learning graph representations with respect to an application. In this
paper, we demonstrate the strength of the framework with the application of
community detection, and we believe the framework can be adapted to other
inference goals in graphs such as link prediction or diffusion estimation. 

Our framework offers a flexible, local weighting and aggregation of different
edge sources in order to better represent the variability of relational
structure observed in real networks. Inspired by concepts in boosting and
bandit learning approaches, LBGA is designed to handle aggregations of noisy
and disparate data sources, therefore marking a departure from methods that
assume overlap and usefulness among all data sources considered. 

LBGA also simplifies the task of designing a graph aggregation algorithm into
designing a principled quality measure $q$ and global event $A$. Doing so
allows us to connect the utility of the graph representation to the application
of interest. As a byproduct, we conjecture that statistics produced by LBGA
provide information about the utility of the graph source, such as the level of
noise. Such information can be used to improve the data collection process and
partially mitigate the effects noise before it propagates to subsequent
analysis. We gave evidence for the resistence of our framework to noise by
running it on datasets with various known levels of noise, and observed the
resulting matching weight distributions. 

Another primary concern for future work is to analyze the utility of our
framework with respect to other graph applications, as well as to present a
more thorough comparison of LBGA with existing multigraph clustering
algorithms. We believe link prediction and label propagation to be ripe
candidates, and there is an established similarity metric for the former of
Adamic and Adar\cite{Adamic01}. Our preliminary experiments have shown
qualitatively different graph structure when using Adamic/Adar in place of
$consistentNO$, but further research is necessary to fully understand the reason
for it, and whether it corresponds to a qualitatively better analysis (in light
of the wealth of literature on metrics for link prediction, we conjecture that
it does).\footnote{There is also a technical barrier to using Adamic/Adar
directly, it is not a [0,1]-valued function.}

Additional directions include a more thorough stability analysis of LBGA,
exploring the modifications we have suggested to improve scalability, and to
prove further theoretical results as described earlier.

\section{Acknowledgements} We thank Vineet Mehta for providing us with the
Enron topic model data, and for his many helpful discussions.

\bibliographystyle{plain}
\bibliography{graphLearning}

\end{document}